\newtheorem{theorem}{Theorem}[section]
\newtheorem{corollary}{Corollary}[section]
\newtheorem{definition}{Definition}[section]
\newtheorem{lemma}{Lemma}[section]
\title{New logarithmic step size for stochastic gradient descent}
\author{ M. Soheil Shamaee \\
	Department of  Computer Science,\\
   Faculty of Mathematical Science, \\
   {University of Kashan}, Kashan, Iran\\
{soheilshamaee@kashanu.ac.ir}\\
\And
S. Fathi Hafshejani\\
Department of Applied Mathematics\\
	Shiraz University of Technology\\
	 Shiraz, Iran\\
{s.fathi@sutech.ac.ir}\\
\And Z. Saeidian\\
Department of Mathematical Sciences\\
University of Kashan\\ Kashan, Iran\\
{saeidian@kashanu.ac.ir}
}
\begin{document}
\maketitle

\begin{abstract}
In this paper, we propose a novel warm restart technique using a new logarithmic step size for the stochastic gradient descent (SGD) approach. For smooth and non-convex functions, we establish an $O(\frac{1}{\sqrt{T}})$ convergence rate for the SGD. We conduct a comprehensive implementation to demonstrate the efficiency of the newly proposed step size on the ~FashionMinst,~ CIFAR10, and CIFAR100 datasets. Moreover, we compare our results with nine other existing approaches and demonstrate that the new logarithmic step size improves test accuracy by $0.9\%$ for the CIFAR100 dataset when we utilize a convolutional neural network (CNN) model.
\end{abstract}

\keywords{Stochastic gradient descent\and Logarithmic step size\and  Warm restart technique}

\section{Introduction}
Stochastic gradient descent (SGD), which dates back to the work by Robbins and Monro \cite{robbins1951stochastic} is widely observed in training modern Deep Neural Networks (DNNs), which are widely used to achieve state-of-the-art results in multiple problem domains like image classification problems \cite{krizhevsky2017imagenet,krizhevsky2009learning}, object detection \cite{redmon2017yolo9000}, and classification automatic machine translation \cite{zhang2015deep}.

The value of the step size (or learning rate) is crucial for the convergence rate of SGD. Selecting an appropriate step size value in each iteration ensures that SGD iterations converge to an optimal solution. If the step size value is too large, it may prevent SGD iterations from reaching the optimal point. Conversely, excessively small step size values can lead to slow convergence or mistakenly identify a local minimum as the optimal solution \cite{mishra2019polynomial}. To address these challenges, various schemes have been proposed. One popular approach is the Armijo line search method, initially introduced for SGD by Vaswani et al. \cite{vaswani2019painless}, which provides theoretical results for strong-convex, convex, and non-convex objective functions. Gower et al. \cite{gower2019sgd} proposed another approach that combines a constant step size with a decreasing step size. Their algorithm starts with a constant step size and switches to a decreasing step size after $4k$ iterations, where $k$ denotes the problem's condition number. This strategy guarantees convergence for strongly convex functions but requires knowledge of the condition number and is not applicable to non-convex functions. Additionally, the decay step size is commonly used in practice for solving non-convex problems encountered during DNN training \cite{krizhevsky2017imagenet, huang2017densely}.


Smith in \cite{smith2017cyclical} proposed an efficient method for setting the step size known as the  cyclical learning rate. {
Training neural networks with cyclical learning rates instead of fixed learning rates can lead to significant improvements in accuracy without the need for manual tuning, often requiring fewer iterations.} Loshchilov and Hutter \cite{loshchilov2016sgdr} utilized this strategy and proposed a {\em warm restart} technique for SGD that does not need to apply gradient information for updating step size in each iteration. {
The key idea behind warm restarts is that in each restart, the learning rate is initialized to a specific value (denoted as $\eta_0$) and scheduled to decrease.} Moreover, it has been shown that a warm restarted SGD takes 2– 4 times less time than a traditional learning rate adjustment strategy \cite{vrbanvcivc2022efficient}. In recent years, numerous step sizes with warm restarts have been introduced \cite{mishra2019polynomial,xu2020stochastic}. Vrbančič extended the idea of the cosine step size and proposed three different step sizes with warm restarts \cite{vrbanvcivc2022efficient}.

The convergence rate, from a theoretical standpoint, is a crucial metric for assessing the effectiveness and efficiency of an algorithm. In the case of the SGD algorithm, the convergence rate depends on various factors, such as the type of objective function (e.g., strong convex, convex, or non-convex) and the value of the step size used in each iteration. 
{For convex and smooth functions with Lipschitz continuous gradients, it has been proven that the SGD algorithm can achieve a convergence rate of $O\left(\frac{1}{\sqrt{T}}\right)$ \cite{nemirovski2009robust,ghadimi2013stochastic}.} The rate of convergence for a strongly convex function is $O\left(\frac{1}{T}\right)$ \cite{moulines2011non,rakhlin2011making}. When the goal is smooth and non-convex functions, Ghadimi and Lan \cite{ghadimi2013stochastic} provided an $O\left(\frac{1}{\sqrt{T}}\right)$ rate of convergence for SGD by a constant step size, i.e., $\eta_t=\frac{1}{\sqrt{T}}$. { In another work, 
Li et al. \cite{li2021second} established  an $O\left(\frac{1}{\sqrt{T}}\right)$ rate of convergence for both cosine and exponential step sizes which were first considered in \cite{loshchilov2016sgdr}. Moreover, their empirical results confirm that both step sizes have the best accuracy and training loss achievements.  Ge et al. \cite{ge2019step} considered  decay step size for solving the least squares problems. Wang et al. \cite{wang2021convergence}  took into account the influence of the probability distribution, denoted as $\frac{\eta_t}{\sum_{t=1}^T\eta_t}$, on both the implementation and theoretical aspects of the algorithm. They demonstrated that within this framework, the SGD algorithm with the exponential step size achieves a convergence rate of $O\left(\frac{\ln T}{\sqrt{T}}\right)$. Notably, their findings reveal that assigning a higher probability to the final iterations through the step size leads to enhancements in accuracy and improvements in the loss function.}


{
In the context of many decay step sizes, the convergence rate of the SGD algorithm is commonly analyzed by considering specific values for the parameters associated with the step size. Several studies, including \cite{li2021second,ghadimi2013stochastic,wang2021convergence}, have explored and derived convergence rates for the SGD algorithm by imposing constraints or assumptions on the initial step size or the parameters related to the step size. These conditions play a crucial role in ensuring the convergence properties of the algorithm. Table \ref{complexity} presents the convergence rates of various decay step sizes. Specifically, the second column of the table indicates the conditions that need to be satisfied by the initial step size $\eta_0$ or the parameters associated with the step size in order to achieve the desired convergence properties. As presented in Table \ref{complexity}, 
to achieve a convergence rate of order $O(\frac{1}{\sqrt{T}})$, Li et al. \cite{li2021second} made the assumption $\eta_0=\frac{1}{Lc}$ and $c\propto O({\sqrt{T}})$. This assumption implies that the value of the initial step size, denoted as $\eta_0$, will be very small. It is evident that for large values of $T$, the initial step size will be extremely tiny, potentially impacting the algorithm's efficiency.}

{
\begin{table*}[t] 
\centering
\begin{tabular}{ |p{4cm}|p{3.5cm}|p{3cm}|c| }
 \hline
 Step sizes & Conditions  & Convergence rate&Reference\\
  \hline
$\eta_t=\frac{\eta_0}{2}(1+\cos(\frac{\pi t}{T}))$ &   $\eta_0=\frac{1}{Lc},~c=O({\sqrt{T}})$ & $O(\frac{1}{\sqrt{T}})$ &\cite{li2021second}\\
 $\eta_t=\eta_0(\frac{\beta}{T})^{\frac{t}{T}}$& $\eta_0=\frac{1}{Lc},~c=O({\sqrt{T}})$& $O(\frac{1}{\sqrt{T}})$&\cite{li2021second}\\
  $\eta_t=\eta_0(\frac{\beta}{T})^{\frac{t}{T}}$ & $\beta=O(\sqrt{T})$ & $O(\frac{\ln T}{\sqrt{T}})$&\cite{wang2021convergence}\\
$\eta_t$=Constant&  $\eta_0=\frac{1}{\sqrt{T}}$  & $O(\frac{1}{\sqrt{T}})$&\cite{ghadimi2013stochastic}\\
 New proposed method & $\eta_0=\frac{1}{Lc},~c=O(\frac{\sqrt{T}}{\textcolor{red}{\ln T}})$ & $O(\frac{1}{\sqrt{T}})$&New \\
 \hline
\end{tabular}
    \caption{Convergence rates of SGD based on the specific values for the initial step size or the parameters associated with the step size. }
    \label{complexity}
\end{table*}}
\subsection{Contribution}
{
Motivated by the aforementioned studies, our work introduces a novel logarithmic step size for stochastic gradient descent (SGD) with the warm restarts technique. The main contributions of this paper can be summarized as follows:}
\begin{itemize}
\item {The new proposed step size offers a significant advantage over the cosine step size \cite{li2021second} in terms of its probability distribution, denoted as $\frac{\eta_t}{\sum_{t=1}^T\eta_t}$ in Theorem \ref{comp}. This distribution plays a crucial role in determining the likelihood of selecting a specific output during the iterations. Fig. \ref{fig:step_size2} (Left) illustrates that the cosine step size assigns a higher probability of selection to the initial iterations but substantially reduces the probability for the final iterations, approaching zero. In contrast, the new step size proves to be more effective for the final iterations, as they enjoy a higher probability of selection compared to the cosine step size. Consequently, the new step size method outperforms the cosine step size method when it comes to the final iterations, benefiting from their increased likelihood of being chosen as the selected solution.
\item For the new step size, we establish the convergence results of the SGD algorithm. By considering that $c\propto O(\frac{\sqrt{T}}{\ln T})$, which leads to the initial value of the step size is greater than the initial value of the step length mentioned in \cite{li2021second}, we demonstrate a convergence rate of $O\left(\frac{1}{\sqrt{T}}\right)$ for a smooth non-convex function. This convergence rate meets the best-known rate of convergence for smooth non-convex functions.}
\item We conduct a comparative experimental analysis of the new logarithmic step size to nine popular step sizes, including Armijo line search \cite{vaswani2019painless}, cosine step size \cite{li2021second}, Adam, $\frac{1}{t}$, $\frac{1}{\sqrt{t}}$, constant step size, reduceLROnPlateau, stagewise - 1 milestone, stagewise - 2 milestones. We compare the performance of our proposed approach with that of the state-of-the-art methods on the FashionMinst, CIFAR10, and CIFAR100 datasets. 
We demonstrate the effectiveness of the newly proposed step size in enhancing the accuracy and training loss of the SGD on popular datasets such as FashionMNIST, CIFAR10, and CIFAR100 (e.g. see Fig. \ref{fig:step_size2} (Right)). Notably, we observe a significant accuracy improvement of $0.9\%$ specifically for the CIFAR100 dataset when we use a convolutional neural network model.
\end{itemize}

The remaining sections of this paper are structured as follows: In Section 2, we introduce the new logarithmic step size and discuss its properties. Section 3 is dedicated to the analysis of convergence rates for the proposed step size on smooth non-convex functions. We establish that the SGD method achieves an impressive convergence rate of $O(\frac{1}{\sqrt{T}})$. In Section 4, we present and discuss the numerical results obtained using the new decay step size. Finally, in Section 5, we summarize our findings and draw conclusions based on our study.
\begin{figure*}[t]
  \centering  
  \includegraphics[width=0.29\textwidth]{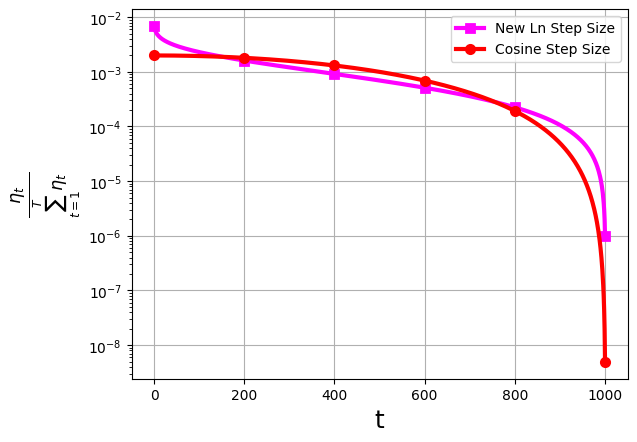}
  \includegraphics[width=0.3\textwidth]{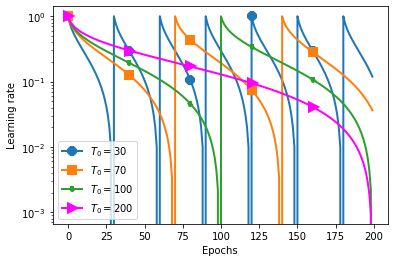}
  \includegraphics[width=0.3\textwidth]{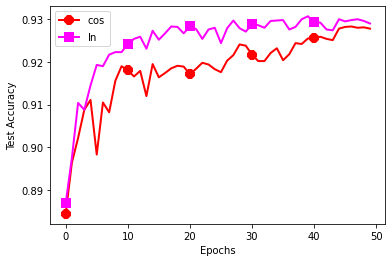}
  \caption{ Left:  Representation of the value $\frac{\eta_t}{\sum_{t=1}^T\eta_t}$ for both the new step size and cosine  step size in \cite{li2021second}. Middle: Warm restarts simulated every $T = 30$ (blue line), $T =70$ (orange line), $T=100$ (green line), and $T=200$ (magenta line) epochs with $\eta_0=1$. Right: The comparison of the warm restart  Algorithm \ref{alg1} with the new proposed step size and cosine step size on the FashionMnist dataset. }
  \label{fig:step_size2}
\end{figure*}

{
In this paper, we use the following notational conventions:
\begin{itemize}
    \item The Euclidean norm of a vector is denoted by $\|.\|$.
\item The nonnegative  {orthant} and positive orthant of $\Bbb{R}^n$ are denoted by $\Bbb{R}+^n$ and $\Bbb{R}{++}^n$, respectively.
\item We use the notation $f(t)=O(g(t))$ to indicate that there exists a positive constant $\omega$ such that $f(t)\leq \omega g(t)$ for all $t\in \Bbb{R}_{++}$.
\end{itemize}}
\section{Problem Set-Up}
We consider the following optimization  problem:  \begin{equation}\label{cost-function}
\min_{x\in \Bbb{R}^d}f(x)=\min_{x\in \Bbb{R}^d} \frac{1}{n}\sum_{i=1}^{n}f_i(x),
\end{equation}
where $f_i :~\Bbb{R}^d\rightarrow \Bbb{R}$ is the loss function for the $i$-th training sample over the variable $x\in\Bbb{R}^d$ and $n$ denotes the number of samples. This minimization problem is central in machine learning. {
Several iterative approaches for solving equation (1) are known \cite{Nocedal}, and SGD is particularly popular when the dimensionality, $n$, is extremely large \cite{robbins51,nemirovski09}}. SGD uses a random training sample $i_k \in \{1,2,...,n\}$ to update $x$ using the rule:
\begin{equation}\label{new_point}
    x_{k+1}=x_k-\eta_k\nabla f_{i_k}(x_k),
\end{equation}
in which $\eta_k$ is the step size used in iteration $k$ and $\nabla f_{i_k}(x)$  is the (average) gradient of the loss function(s) \cite{vaswani2019painless}.
\\
Our analysis and results in this paper are based on the following assumptions.
\begin{itemize}
    \item {\bf{Assumption 1.}} The function $f$ is differentiable and L-smooth, that is, for all $x, y\in \Bbb{R}^d$ there exists
a constant $L > 0$ such that:
\begin{eqnarray*}
   & \|\nabla f(x)-\nabla f(y)\|\leq L\|x-y\|,&
\\
        &f(y)\leq f(x)+\langle \nabla f(x),y-x \rangle+\frac{L}{2}\|y-x\|^2.&
    \end{eqnarray*}
 \item {\bf{Assumption 2.}} {For any random $t\in [1,T]$,} we have
\begin{equation}
    \Bbb{E}_t\left[\|g_t-\nabla f(x_t)\|^2\right]\leq \sigma^2.
\end{equation}
 \item {\bf{Assumption 3.}} The objective function $f$ is bounded below on $\Bbb{R}^d$.
\end{itemize}
\subsection{The New Step Size}
It has been observed that larger values of the step size provide the model with sufficient energy to escape from critical points in the initial iterations, while smaller step sizes guide the model towards well-behaved local minima in the final iterations {\cite{mishra2019polynomial}}. {However, when the step length of a function takes on large values over multiple iterations and rapidly tends to zero towards the end, it adversely affects the behavior of the distribution function denoted as $\frac{\eta_t}{\sum_{t=1}^T\eta_t}$. Consequently, based on the conditions described in Theorem \ref{comp}, the probability of selecting points from the final iterations decreases significantly compared to the initial iterations.
\\
To address these issues, we propose an appropriate step size that its values gradually decrease during the iterations. In this context, we introduce a logarithmic step size that exhibits slower convergence to zero compared to many other step sizes, yet converges faster than the cosine step size.} 
The new step size is defined as follows:
\begin{definition}
For a constant $\eta_0$, the new logarithmic step size for SGD is given by:
\begin{equation}\label{new_step_size}
    \eta_t=\eta_0\left(1-\frac{\ln(t)}{\ln T}\right),\quad~ t \in [1,T],
\end{equation}
\end{definition}

where $T$ represents the number of iterations in each inner loop of the algorithm and $t$ represents the number of epochs performed since the last restart. To use the warm restarts policy, we follow the model proposed in \cite{smith2017cyclical}, which states that the training process is divided into $l$ cycles (i.e., the outer loop of Algorithm \ref{alg1}), and the algorithm runs each cycle in $T$ epochs. Each cycle begins with the largest value of step size $\eta_0$. During the running of a cycle, the value of the learning rate decreases by using (\ref{new_step_size}). In addition, when $t=T$ the step size $\eta$ will be output, i.e., $\eta_T=0$. The behaviours of the new proposed step size for different values of $T$ are shown in Fig. \ref{fig:step_size2} (Middle). It is clear that for small values of $T$, that is, $T=\{30,70\}$, the algorithm performs more inner loop, and the values of step size quickly go to zeros, but when $T=200$, the algorithm performs one inner loop and the step size goes to zeros very slowly.
\\
The lemma that follows gives us some properties of the function $\ln(x)$. We will utilize them frequently in the rest of the paper.
\begin{lemma}\label{ln_pro}
    For the function $\ln (x)$, we have:
    \begin{itemize}
    \item $\ln xy=\ln x+\ln y$
    \item $\ln\frac{x}{y}=\ln x-\ln y$.
   \item $\frac{x-1}{x}\leq \ln (x)$, for all $x\geq 1$.
   \end{itemize} 
\end{lemma}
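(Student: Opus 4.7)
The plan is to dispatch the three items separately, since they are standard facts about the natural logarithm, but to present each proof in a way that fits cleanly into the paper's later computations (which will repeatedly invoke the third inequality to bound sums involving $\ln t/\ln T$).

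First, for the two algebraic identities (items 1 and 2), I would take the definition of $\ln$ as the inverse of the exponential, so that $\ln xy$ is characterized by $e^{\ln xy}=xy$. I would then compute $e^{\ln x+\ln y}=e^{\ln x}\cdot e^{\ln y}=xy$, and conclude $\ln xy=\ln x+\ln y$ by injectivity of $\exp$. The quotient identity follows immediately by applying the first identity to $x=(x/y)\cdot y$, rearranging to get $\ln(x/y)=\ln x-\ln y$. These two lines are essentially a warm-up and occupy no more than a couple of sentences.

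The only content is item 3. The cleanest approach is the integral representation $\ln x=\int_1^x \frac{1}{s}\,ds$. For $x\geq 1$ and $s\in[1,x]$ we have $\frac{1}{s}\geq \frac{1}{x}$, so integrating termwise gives
\begin{equation*}
\ln x=\int_1^x \frac{1}{s}\,ds\ \geq\ \int_1^x \frac{1}{x}\,ds=\frac{x-1}{x},
\end{equation*}
which is the required bound. An equivalent route, if one prefers to avoid integration, is to set $\varphi(x)=\ln x-\frac{x-1}{x}$, observe that $\varphi(1)=0$, and compute $\varphi'(x)=\frac{1}{x}-\frac{1}{x^2}=\frac{x-1}{x^2}\geq 0$ for $x\geq 1$, so $\varphi$ is non-decreasing on $[1,\infty)$ and therefore non-negative there. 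Either argument is a few lines.

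There is no real obstacle: items 1 and 2 are definitional and item 3 is a one-line integral (or one-line monotonicity) argument. The only mild care needed is to make sure the inequality is stated for the full range $x\geq 1$, with equality exactly at $x=1$, since later applications in the convergence analysis will plug in $x=T/t$ with $t\leq T$, which satisfies the hypothesis.
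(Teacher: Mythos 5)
Your proof is correct, and for item 3 it actually contains the paper's argument verbatim as your ``equivalent route'': the paper defines $h(x)=\ln x-\frac{x-1}{x}$, computes $h'(x)=\frac{x-1}{x^2}\geq 0$ for $x\geq 1$, and uses $h(1)=0$ to conclude $h\geq 0$, exactly as in your monotonicity variant. Your primary route via the integral representation $\ln x=\int_1^x \frac{ds}{s}\geq \int_1^x \frac{ds}{x}=\frac{x-1}{x}$ is a genuinely different (and arguably more immediate) derivation of the same bound --- it replaces the derivative computation and the appeal to monotonicity with a single pointwise estimate of the integrand, at the cost of assuming the integral definition of $\ln$. For items 1 and 2 the paper simply cites a calculus textbook rather than proving them, so your short argument via injectivity of $\exp$ supplies a self-contained proof the paper omits; nothing hinges on this, since those identities are standard. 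Your closing remark about the application range is apt: the later lemmas indeed invoke the inequality with $x=T/t\geq 1$, which is exactly the hypothesis you verified.
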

\section{Algorithm and convergence }
We use the warm restart Algorithm \ref{alg1} with condition { $T_0=T_1=...=T_l$
which means the algorithm runs with the same number of epochs in the inner loop.} Algorithm \ref{alg1} begins with the given initial step size $\eta_0$, the number of inner iterations $T$, the number of outer epochs $l$, and the starting point $x^1_0$. The algorithm consists of two loops, i.e., outer and inner loops. In each inner loop, the SGD with the logarithmic step size is performed.
\\
\begin{algorithm*}[t]
\caption{SGD with Warm Restarts Logarithmic Step Size}
\label{alg1}
 {Input:}~Initial Step size $\eta_0$, initial point $x^1_0$, the number of outer and inner iterations, i.e., $l$ and $T$.\\
 { Set:} $T_0=T$\\
\For {$i=0,...,l$}{
 \For {$t=1,...,T$}{
 Set $\eta_t = \eta_0 \left(1 - \frac{\ln t}{\ln T}\right)$ and update $x_i^{t+1}$ using (\ref{new_point}).
\\
 End}
 $x^1_{i+1}=x_i^{T}$\\
  End}
$x^*=x^T_l$
\end{algorithm*}
\subsection{Convergence results for smoothness function}
We present the convergence bound for SGD with logarithmic step sizes when the objective function is a smooth and non-convex function. The lemma that follows provides a lower bound for {$\sum_{t=1}^T\eta_t$.}
\begin{lemma}\label{low_bound}
For the new proposed step size given by (\ref{new_step_size}), we have:
\begin{equation}
\sum_{t=1}^{T}\eta_{t}\geq \eta_0\frac{T+1}{2\ln T}.\nonumber 
\end{equation}
\end{lemma}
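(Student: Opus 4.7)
The plan is to rewrite the sum in a form where the third bullet of Lemma \ref{ln_pro} applies directly. First, using $\ln T-\ln t=\ln(T/t)$, I would express
\begin{equation*}
\eta_t \;=\; \frac{\eta_0\,\ln(T/t)}{\ln T},\qquad \sum_{t=1}^{T}\eta_t \;=\; \frac{\eta_0}{\ln T}\sum_{t=1}^{T}\ln\!\left(\frac{T}{t}\right),
\end{equation*}
so the problem reduces to producing a lower bound on $\sum_{t=1}^{T}\ln(T/t)$.

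Next, since $T/t\geq 1$ for every $t\in\{1,\dots,T\}$, the third bullet of Lemma \ref{ln_pro} gives $\ln(T/t)\geq (T/t-1)/(T/t)=(T-t)/T$. Summing this pointwise inequality turns the logarithmic sum into a clean arithmetic one:
\begin{equation*}
\sum_{t=1}^{T}\ln\!\left(\frac{T}{t}\right) \;\geq\; \frac{1}{T}\sum_{t=1}^{T}(T-t) \;=\; \frac{1}{T}\cdot\frac{T(T-1)}{2} \;=\; \frac{T-1}{2}.
\end{equation*}
Multiplying back by $\eta_0/\ln T$ immediately produces a bound of the advertised shape $\eta_0\cdot\Theta(T/\ln T)$, which is what the subsequent convergence analysis actually uses.

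The only subtle point — and the step I expect to be the real obstacle — is promoting the constant from $T-1$ to the stated $T+1$. The inequality $\ln x\geq (x-1)/x$ is noticeably lossy near $x=T$, i.e., at the endpoint $t=1$, where in fact $\ln(T/1)=\ln T$ is already large in its own right. The natural fix is therefore to peel off the $t=1$ summand, apply Lemma \ref{ln_pro} only to the range $t\geq 2$, and absorb the leftover $\ln T$ into the additive gap; provided $T$ is at least a modest constant (the regime of interest throughout the paper), the resulting estimate dominates $(T+1)/2$. No new analytical idea beyond Lemma \ref{ln_pro} is needed — just careful bookkeeping on the boundary term.
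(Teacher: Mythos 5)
Your derivation follows the paper's own route exactly---the same rewriting $\sum_{t=1}^{T}\eta_t=\frac{\eta_0}{\ln T}\sum_{t=1}^{T}\ln(T/t)$ and the same use of the third item of Lemma \ref{ln_pro}---but your bookkeeping is more careful than the paper's, and you have in fact caught a genuine error in its proof. The paper passes from $\frac{\eta_0}{T\ln T}\sum_{t=1}^{T}(T-t)$ to the stated constant by asserting that $\sum_{t=1}^{T}(T-t)=\sum_{t=1}^{T}t=\frac{T(T+1)}{2}$; this is off by one, since $\sum_{t=1}^{T}(T-t)=\sum_{s=0}^{T-1}s=\frac{T(T-1)}{2}$, which yields exactly the $\frac{T-1}{2}$ you obtained. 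Your peeling repair is also sound as far as it goes: separating the $t=1$ term gives $\sum_{t=1}^{T}\ln(T/t)\geq \ln T+\frac{(T-1)(T-2)}{2T}$, and comparing this with $\frac{T+1}{2}$ reduces, after simplification, to the condition $\ln T\geq 2-\frac{1}{T}$, which holds for all $T\geq 7$; the cases $T=5,6$ can be verified by direct evaluation of the sum. The one point you should make explicit rather than leave as a proviso is that the lemma \emph{as stated}, with no restriction on $T$, is simply false for small $T$: for $T=2$ one has $\eta_1+\eta_2=\eta_0$, while the claimed lower bound is $\frac{3\eta_0}{2\ln 2}\approx 2.16\,\eta_0$, and $T=3,4$ fail similarly. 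So a hypothesis such as $T\geq 5$ is unavoidable, the paper's unqualified statement cannot be rescued without it, and your argument---with that restriction added---is correct and strictly more rigorous than the paper's own proof.
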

The next lemma gives an upper bound for $\sum_{t=1}^T\eta_t^2$.
\begin{lemma}\label{uper_bound}
For the step size given by (\ref{new_step_size}), we have:
\begin{equation}
   {\sum_{t=1}^T\eta_t^2\leq \eta_0^2\frac{2T}{\ln^2 T}.}
\end{equation}
\end{lemma}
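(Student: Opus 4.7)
The plan is to rewrite the step size in a form that exposes the logarithm explicitly, reduce the question to bounding $\sum_{t=1}^T \ln^2(T/t)$, and then compare that sum to an integral that can be computed in closed form.

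First I would use the identity $1 - \tfrac{\ln t}{\ln T} = \tfrac{\ln(T/t)}{\ln T}$ (a direct consequence of the first two bullets of Lemma \ref{ln_pro}) to write
\[
\eta_t^2 \;=\; \frac{\eta_0^2}{\ln^2 T}\,\ln^2(T/t),
\]
so that the claim reduces to showing
\[
S \;:=\; \sum_{t=1}^{T} \ln^2(T/t) \;\leq\; 2T.
\]

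Next I would observe that the function $\varphi(x) := \ln^2(T/x)$ is nonnegative and decreasing on $[1,T]$ (since $\ln(T/x)\geq 0$ there and $\ln(T/x)$ itself is decreasing). The standard integral comparison for decreasing functions therefore gives
\[
S \;\leq\; \varphi(1) + \int_{1}^{T} \varphi(x)\,dx \;=\; \ln^2 T + \int_{1}^{T}\ln^2(T/x)\,dx.
\]
The substitution $u = T/x$ transforms the integral into $T \int_{1}^{T} \tfrac{\ln^2 u}{u^2}\,du$, which I would then evaluate by integration by parts twice (first with $v=\ln^2 u$, $dw = du/u^2$, then treating the residual $\int \ln u / u^2\,du$ the same way). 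The antiderivative works out to $-\tfrac{\ln^2 u + 2\ln u + 2}{u}$, and evaluating between $1$ and $T$ yields
\[
\int_{1}^{T} \ln^2(T/x)\,dx \;=\; 2T - \ln^2 T - 2\ln T - 2.
\]

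Adding the $\varphi(1) = \ln^2 T$ contribution cancels the $-\ln^2 T$ term and gives $S \leq 2T - 2\ln T - 2 \leq 2T$ for $T\geq 1$, which is exactly the desired bound after dividing by $\ln^2 T$ and multiplying by $\eta_0^2$. The main obstacle is the integration-by-parts step, which is not conceptually hard but must be executed carefully so that all four boundary terms combine correctly; the slight slack produced by the $-2\ln T - 2$ is what makes the clean upper bound $2T$ possible without tracking additional lower-order terms.
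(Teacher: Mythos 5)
Your proposal is correct and follows essentially the same route as the paper: both bound the sum by $\varphi(1)+\int_1^T \varphi(x)\,dx$ via the decreasing-function integral comparison and evaluate that integral exactly, arriving at the identical quantity $2T-2\ln T-2$ before discarding the negative terms. The only difference is cosmetic — you compute the integral via the substitution $u=T/x$ and integration by parts, whereas the paper expands $\left(1-\frac{\ln t}{\ln T}\right)^2$ and uses the tabulated antiderivatives of $\ln x$ and $\ln^2 x$.
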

\begin{lemma}\label{uper_grad}
 Under Assumptions 1 and 2, and $\eta_t \leq \frac{1}{cL}
$. Then, Algorithm 1 with the new proposed step size guarantees:
\begin{equation}
    \frac{\eta_t}{2}\Bbb{E}[\|\nabla f(x_t)\|^2]\leq \Bbb{E}[f(x_t)]-\Bbb{E}[f(x_{t+1})]+\frac{L\eta^2_t\sigma^2}{2}\nonumber
\end{equation}
\end{lemma}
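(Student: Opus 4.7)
The plan is to follow the textbook descent-lemma argument for SGD on $L$-smooth non-convex objectives, using the step-size restriction $\eta_t \leq \tfrac{1}{cL}$ (with $c \geq 1$ implicit) precisely to absorb the quadratic variance term.

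First, I would apply the smoothness inequality from Assumption 1 with $y = x_{t+1}$ and $x = x_t$. Since the update rule \eqref{new_point} gives $x_{t+1} - x_t = -\eta_t g_t$ where $g_t = \nabla f_{i_t}(x_t)$, this yields
\begin{equation*}
f(x_{t+1}) \leq f(x_t) - \eta_t \langle \nabla f(x_t), g_t \rangle + \frac{L\eta_t^2}{2}\|g_t\|^2.
\end{equation*}
Next, I would take the conditional expectation $\mathbb{E}_t[\cdot]$ with respect to the random index $i_t$. Because $i_t$ is drawn uniformly from $\{1,\dots,n\}$ and $f=\tfrac{1}{n}\sum_i f_i$, the stochastic gradient is unbiased, i.e.\ $\mathbb{E}_t[g_t] = \nabla f(x_t)$, so the cross term collapses to $\|\nabla f(x_t)\|^2$.

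For the squared-gradient term I would use the standard bias-variance decomposition together with Assumption 2:
\begin{equation*}
\mathbb{E}_t\bigl[\|g_t\|^2\bigr] = \mathbb{E}_t\bigl[\|g_t - \nabla f(x_t)\|^2\bigr] + \|\nabla f(x_t)\|^2 \leq \sigma^2 + \|\nabla f(x_t)\|^2.
\end{equation*}
Substituting these two identities gives
\begin{equation*}
\mathbb{E}_t[f(x_{t+1})] \leq f(x_t) - \eta_t\!\left(1 - \frac{L\eta_t}{2}\right)\|\nabla f(x_t)\|^2 + \frac{L\eta_t^2\sigma^2}{2}.
\end{equation*}
The step-size hypothesis $\eta_t \leq \tfrac{1}{cL}$ (with $c \geq 1$) ensures $L\eta_t \leq 1$, hence $1 - \tfrac{L\eta_t}{2} \geq \tfrac{1}{2}$, which lets me lower-bound the coefficient of $\|\nabla f(x_t)\|^2$ by $\tfrac{\eta_t}{2}$. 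Finally, taking total expectation and rearranging produces the claimed inequality.

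There is no real obstacle here; the only subtle point is the implicit use of unbiasedness of $g_t$, which is not stated as a separate assumption but follows directly from the sampling scheme for $i_t$ in \eqref{new_point} together with the definition of $f$ in \eqref{cost-function}. One should also verify that the constant $c$ in $\eta_t \leq \tfrac{1}{cL}$ is at least $1$, which is consistent with the table in the introduction where $c \propto O(\sqrt{T}/\ln T)$.
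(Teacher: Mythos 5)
Your proposal is correct and takes essentially the same route as the paper's own proof: the $L$-smoothness descent inequality with $x_{t+1}-x_t=-\eta_t g_t$, unbiasedness of the stochastic gradient, the bias--variance decomposition $\mathbb{E}_t[\|g_t\|^2]=\mathbb{E}_t[\|g_t-\nabla f(x_t)\|^2]+\|\nabla f(x_t)\|^2$ bounded via Assumption 2, and the condition $\eta_t\leq\frac{1}{cL}\leq\frac{1}{L}$ (with $c\geq 1$, consistent with the paper's $c>1$ in Theorem \ref{comp}) to absorb the quadratic term and obtain the coefficient $\frac{\eta_t}{2}$. Your explicit use of conditional expectation followed by the tower property is in fact slightly cleaner than the paper's direct expectation, but the argument is otherwise identical.
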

The next theorem gives an upper bound for \\$  \Bbb{E}\|\nabla f(\bar{x}_T)\|^2$.  
\begin{theorem}\label{comp}
 Under Assumptions 1 and 2, and $c>1$. SGD with the new proposed step sizes with $\eta_0=\frac{1}{cL}$ guarantees:
\begin{eqnarray}
  \Bbb{E}\|\nabla f(\bar{x}_T)\|^2\nonumber\leq \frac{4cL\ln T}{(T+1)}\left(f(x_1)-f^*\right)+\frac{4\sigma^2T}{Lc(T+1)\ln T}\nonumber
\end{eqnarray}
where $\bar{x}_T$ is a random iterate drawn from the sequence $\{x_t\}_{t=1}^T$ with probability $\Bbb{P}[\bar{x}_T=x_t]=\frac{\eta_t}{\sum_{t=1}^T\eta_t}$.
\end{theorem}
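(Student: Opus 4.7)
The plan is to assemble the three preceding lemmas in the standard non-convex SGD template, using the randomized iterate construction to turn a weighted sum into an expectation.

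First I would verify that the hypothesis of Lemma~\ref{uper_grad} is met: since $\eta_0 = 1/(cL)$ and the factor $1 - \ln t /\ln T$ is in $[0,1]$ for $t\in[1,T]$, we have $\eta_t \leq \eta_0 = 1/(cL)$. Thus Lemma~\ref{uper_grad} applies at every iteration. I would then sum the inequality of Lemma~\ref{uper_grad} over $t=1,\dots,T$. The right-hand side telescopes, and using Assumption~3 to replace $\mathbb{E}[f(x_{T+1})]$ by the lower bound $f^*$, this yields
\begin{equation}
\sum_{t=1}^{T}\frac{\eta_t}{2}\,\mathbb{E}\bigl[\|\nabla f(x_t)\|^2\bigr]
\;\leq\; f(x_1) - f^{*} + \frac{L\sigma^2}{2}\sum_{t=1}^{T}\eta_t^2.\nonumber
\end{equation}

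Next I would divide both sides by $\sum_{t=1}^{T}\eta_t$. By the definition of the randomized iterate $\bar{x}_T$, the probabilistic weights $\eta_t/\sum_s \eta_s$ are exactly what convert the weighted sum on the left into $\frac{1}{2}\mathbb{E}\|\nabla f(\bar{x}_T)\|^2$. This gives
\begin{equation}
\mathbb{E}\|\nabla f(\bar{x}_T)\|^2 \;\leq\; \frac{2\bigl(f(x_1)-f^*\bigr)}{\sum_{t=1}^T \eta_t} \;+\; \frac{L\sigma^2 \sum_{t=1}^T \eta_t^2}{\sum_{t=1}^T \eta_t}.\nonumber
\end{equation}

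Finally I would plug in the two quantitative bounds already proved: Lemma~\ref{low_bound} gives $\sum_t \eta_t \geq \eta_0(T+1)/(2\ln T)$, so $1/\sum_t \eta_t \leq 2\ln T/(\eta_0(T+1))$, and Lemma~\ref{uper_bound} gives $\sum_t \eta_t^2 \leq 2\eta_0^2 T/\ln^2 T$. Substituting and simplifying produces a first term $\tfrac{4\ln T}{\eta_0(T+1)}(f(x_1)-f^*)$ and a second term proportional to $\eta_0 T/((T+1)\ln T)$. Replacing $\eta_0$ by $1/(cL)$ in both places yields the advertised bound. The argument is essentially a bookkeeping exercise once the three lemmas are in hand, so I do not anticipate any real obstacle; the only point demanding care is matching the two $\eta_0$-substitutions so that the constants $4cL$ and $4/(Lc)$ (or $4/c$, up to the paper's stated form) come out consistently, and checking that the monotonicity argument that verifies $\eta_t\leq 1/(cL)$ is justified at $t=1$ where the step length equals $\eta_0$ exactly.
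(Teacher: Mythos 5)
Your proposal matches the paper's proof essentially step for step: apply Lemma~\ref{uper_grad} at each iteration, telescope the sum, divide by $\sum_{t=1}^T\eta_t$ using the definition of $\bar{x}_T$, then invoke Lemmas~\ref{low_bound} and~\ref{uper_bound} and substitute $\eta_0=\frac{1}{cL}$ (your explicit check that $\eta_t\leq\eta_0$ is a welcome detail the paper leaves implicit). Your parenthetical caution about the second constant is also warranted: the algebra yields $\frac{4\sigma^2 T}{c(T+1)\ln T}$, so the extra $L$ in the denominator of the paper's stated bound is a slip on the paper's side, not a gap in your argument.
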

\begin{proof}
Using the definition of $\bar{x}_T$ and Lemma \ref{uper_grad}, we have:
\begin{eqnarray}
  \Bbb{E}[\|\nabla f(\bar{x}_T)\|^2]=\frac{\eta_t\Bbb{E}[\|\nabla f(x_t)\|^2]}{\sum_{t=1}^T\eta_t}
     &\leq&\frac{2\sum_{t=1}^T\left[\Bbb{E}[f(x_t)]-\Bbb{E}[f(x_{t+1})]\right]}{\sum_{t=1}^T\eta_t}
     +\frac{L\sigma^2\sum_{t=1}^T\eta_t^2}{\sum_{t=1}^T\eta_t}\nonumber\\
      &\leq&\frac{2\left(f(x_1)-f^*\right)}{\sum_{t=1}^T\eta_t}+\frac{L\sigma^2\sum_{t=1}^T\eta_t^2}{\sum_{t=1}^T\eta_t}
      \leq\frac{4\ln T\left(f(x_1)-f^*\right)}{\eta_0(T+1)}+\frac{4L\sigma^2\eta_0T}{(T+1)\ln T}\nonumber\\
          &=&\frac{4cL\ln T}{(T+1)}\left(f(x_1)-f^*\right)+\frac{4\sigma^2T}{Lc(T+1)\ln T}.\nonumber
\end{eqnarray}
where the third inequality is obtained by using Lemmas \ref{low_bound} and \ref{uper_bound}. 
\end{proof}
{
As mentioned earlier, selecting a smaller value for the parameter $c$ compared to the cosine function used in \cite{li2021second} enables us to achieve the optimal convergence rate for SGD based on the new logarithmic step size. Specifically, when considering $c \propto O\left(\frac{\sqrt{T}}{\ln T}\right)$, we have the following result:}
\begin{corollary} Based on the Theorem \ref{comp}, if $\sigma\neq 0$, setting $c\propto \frac{\sqrt{T}}{\ln T} $, we have:
    \begin{eqnarray}
       \Bbb{E}\|\nabla f(\bar{x}_T)\|^2\leq \frac{4L}{\sqrt{T}}\left(f(x_1)-f^*\right)+\frac{4\sigma^2}{L\sqrt{T}} \nonumber
    \end{eqnarray}
\end{corollary}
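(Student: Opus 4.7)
The plan is to obtain the corollary as a direct specialization of Theorem \ref{comp}. Since Theorem \ref{comp} already gives the fully parameterized bound
\begin{equation*}
\Bbb{E}\|\nabla f(\bar{x}_T)\|^2 \leq \frac{4cL\ln T}{T+1}\left(f(x_1)-f^*\right) + \frac{4\sigma^2 T}{Lc(T+1)\ln T},
\end{equation*}
the work reduces to substituting the prescribed scaling $c \propto \frac{\sqrt{T}}{\ln T}$ and simplifying each of the two terms.

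First I would verify that this choice of $c$ is consistent with the hypotheses of Theorem \ref{comp}, namely $c>1$ and the step-size cap $\eta_t \leq \frac{1}{cL}$; for sufficiently large $T$ the inequality $\frac{\sqrt{T}}{\ln T} > 1$ holds, and because the logarithmic step size satisfies $\eta_t \leq \eta_0 = \frac{1}{cL}$ by construction, no additional verification is needed. Then I would plug $c = \frac{\sqrt{T}}{\ln T}$ into the first term: the $\ln T$ factors cancel, leaving $\frac{4L\sqrt{T}}{T+1}\left(f(x_1)-f^*\right)$, which I bound by $\frac{4L}{\sqrt{T}}\left(f(x_1)-f^*\right)$ via $\sqrt{T}/(T+1) \leq 1/\sqrt{T}$. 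For the noise term, the same substitution collapses the $\ln T$ factors and yields $\frac{4\sigma^2 \sqrt{T}}{L(T+1)}$, which again is bounded by $\frac{4\sigma^2}{L\sqrt{T}}$ by the same inequality. Summing the two bounds produces exactly the stated corollary.

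There is essentially no obstacle here: the corollary is an algebraic consequence of Theorem \ref{comp} together with the mild estimate $\sqrt{T}/(T+1) \leq 1/\sqrt{T}$. The only point that requires care is the meaning of the proportionality symbol $\propto$; strictly speaking one should choose $c = K \frac{\sqrt{T}}{\ln T}$ for some absolute constant $K>1$, in which case the stated bound holds with the constant $4$ replaced by $4K$ in the first term and $4/K$ in the second term. Writing $c$ exactly equal to $\sqrt{T}/\ln T$ (i.e.\ $K=1$) gives the displayed inequality verbatim, and this is the interpretation I would adopt in the proof.
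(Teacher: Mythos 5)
Your proposal is correct and matches the paper's (implicit) argument: the corollary is stated in the paper without a separate proof precisely because it is the direct substitution $c = \frac{\sqrt{T}}{\ln T}$ into Theorem \ref{comp}, with the $\ln T$ factors cancelling in both terms and the elementary bound $\frac{\sqrt{T}}{T+1} \leq \frac{1}{\sqrt{T}}$ finishing each one, exactly as you carry out. Your added remarks --- checking $c>1$ for large $T$ and noting that a proportionality constant $K$ would change the two constants to $4K$ and $4/K$ --- are accurate and, if anything, slightly more careful than the paper itself.
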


{
Now, leveraging the results obtained from Theorem \ref{comp}, we can compute the convergence rate for the warm restart SGD algorithm.}
\begin{corollary}\label{warm} {(SGD with warm restarts): Under Assumptions A1, A2, and A3, for a given value of $T$, where $\sum_{i=1}^{l}T_i=lT$ and $\eta_0=\frac{1}{cL}$, Algorithm \ref{alg1} ensures the following convergence guarantees:}
    \begin{equation}
    \Bbb{E}\|\nabla f(\bar{x}_T)\|^2\leq \frac{4lcL\ln T}{T}\left(f(\tilde{x}_{1})-f^*\right)+\frac{4\sigma^2l}{Lc\ln T},\nonumber
\end{equation} 
where $f(\tilde{x}_1)=\max_i\{f(x_{1_i})\}$ for $i=1,2,...,l$.
\end{corollary}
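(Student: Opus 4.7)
The plan is to reduce the warm-restart analysis to a cycle-by-cycle application of Theorem \ref{comp}. Algorithm \ref{alg1} consists of an outer loop over $l$ cycles, each of which runs SGD with the logarithmic step size schedule for exactly $T$ inner iterations, starting from $x^1_i := x^T_{i-1}$ and with the same $\eta_0 = \tfrac{1}{cL}$. Since a single cycle is identical in structure to the setting analyzed in Theorem \ref{comp}, that theorem is directly applicable to each cycle in turn, treating the cycle's starting point as the ``initial point'' of the theorem.

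First, I would fix $i \in \{1,\dots,l\}$ and apply Theorem \ref{comp} to cycle $i$, obtaining
\begin{equation*}
\mathbb{E}\|\nabla f(\bar{x}_{T,i})\|^2 \leq \frac{4cL\ln T}{T+1}\bigl(f(x^1_i) - f^*\bigr) + \frac{4\sigma^2 T}{Lc(T+1)\ln T},
\end{equation*}
where $\bar{x}_{T,i}$ denotes the iterate randomly selected within cycle $i$ via the step-size-weighted distribution. Next, by the definition $f(\tilde{x}_1) = \max_{1 \leq i \leq l} f(x^1_i)$, I would replace the cycle-dependent gap $f(x^1_i) - f^*$ with the uniform bound $f(\tilde{x}_1) - f^*$. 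I would then use the elementary inequalities $\tfrac{1}{T+1} \leq \tfrac{1}{T}$ and $\tfrac{T}{T+1} \leq 1$ to absorb the harmless ``$+1$'' and arrive at
\begin{equation*}
\mathbb{E}\|\nabla f(\bar{x}_{T,i})\|^2 \leq \frac{4cL\ln T}{T}\bigl(f(\tilde{x}_1) - f^*\bigr) + \frac{4\sigma^2}{Lc\ln T}.
\end{equation*}

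Finally, summing this cycle-wise inequality over $i = 1,\dots,l$ yields exactly the claimed bound, the factor $l$ appearing as a direct consequence of aggregating across the $l$ outer iterations (equivalently, $\bar{x}_T$ on the left-hand side is interpreted as the composite random iterate produced by Algorithm \ref{alg1}, whose error decomposes into a sum over cycles). Assumption 3 is invoked only implicitly, to ensure that $f^*$ is finite so the right-hand side is well-defined. The only genuinely delicate point, and the place where I would be most careful, is ensuring that the probabilistic selection of $\bar{x}_T$ across cycles is consistent with the per-cycle distribution used in Theorem \ref{comp}; once this is settled, the remainder is bookkeeping and straightforward summation.
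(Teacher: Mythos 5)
Your proposal follows essentially the same route as the paper's own proof: apply Theorem~\ref{comp} cycle by cycle with $x^1_i$ as the starting point, uniformize the per-cycle gaps $f(x_{1_i})-f^*$ via $f(\tilde{x}_1)=\max_i\{f(x_{1_i})\}$, absorb the $T+1$ into $T$, and pick up the factor $l$ by aggregating the $l$ per-cycle bounds (the paper bounds the sum by $l$ times the maximum). The one point you rightly flagged as delicate --- what $\bar{x}_T$ means across cycles --- the paper sidesteps rather than resolves, by writing the left-hand side as $\min_i \Bbb{E}\|\nabla f(\bar{x}_{T_i})\|^2$ and bounding it by the sum of the nonnegative per-cycle terms, which is a legitimate (if slightly weaker-looking) reading that makes your summation step rigorous without needing a composite sampling distribution.
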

\section{Numerical Results}
\label{sec:numerical-results}
\subsection{Experiments on MNIST,  CIFAR10 and CIFAR100}
In this section, we evaluate our proposed algorithm's performance on image classification tasks. We compare the performance of the proposed approach with that of the state-of-the-art methods on the FashionMinst, CIFAR10, and CIFAR100 datasets {\cite{krizhevsky2009learning}}. FashionMinst is a dataset consisting of a training set of 60000 examples of grayscale images and a test set of 10000 examples. Each image is $28*28$ pixels. We use the convolutional neural network (CNN) model for the classification task on this dataset. This model has two convolutional layers with kernel sizes of $5\times 5$ and padding of $2$, two max-pooling layers with kernel sizes of $2\times 2$, and two fully connected layers with $1024$ hidden nodes. The activation function of the hidden nodes is the rectified linear unit (ReLU). To prevent overfitting, we use a dropout with a probability of $0.5$ in the hidden layer of the deep model. The number of input neurons is $746$, and the number of output neurons is $10$. To compare the performance of various algorithms, we use the cross-entropy function as a loss function and an accuracy metric. 

The CIFAR10 dataset consists of 60,000 of $32\times 32$ color images of 10 classes, each with 6000 images. There are 50,000 training images and 10,000 test images. The batch size is $128$. It means that each epoch of the training includes $390$ iterations. The deep learning architecture used for evaluating the performance of the algorithms on this dataset is the $20$-layer residual neural network (ResNet) \cite{he2016deep}. This model uses cross-entropy as the loss function.  

The CIFAR100 dataset is just like the CIFAR10, except it has 100 classes containing 600 natural images each. There are 500 training images and 100 testing images per class. There are $50,000$ images available for training and $10,000$ images available for testing. Randomly cropped and flipped images are used for training. {We have conducted two series of experiments on this dataset based on two deep learning models. The first one is the same as that used on FashionMinst which is described above. The reason for using this model may be the limitation in the hardware available to run the experiments. The second one is a DenseNet-BC model with $100$ layers and a growth rate of $12$ \cite{huang2017densely}. } 

{
To optimize the hyperparameter $\eta_0$ of the new proposed step decay on the CIFAR100 dataset, we employed a two-stage grid search on the validation set. Initially, we explored a rough grid and picked the hyperparameters that produced the best validation outcomes. Subsequently, we conducted a more refined search centered around the most effective hyperparameters identified in the first stage and chose the optimal set of hyperparameters as the final selection. For the starting step size $\eta_0$, we used a coarse search grid of $\{0.00001, 0.0001, 0.001, \\ 0.01, 0.1, 0.2,0.3,0.4,0.5,0.6,0.7, 0.8, 0.9,1\}$, and a fine grid of $\{0.41, 0.42, 0.43, 0.44, 0.45, 0.46, 0.47, \\ 0.48, 0.49, 0.51, 0.52, 0.53, 0.54, 0.55, 0.56, 0.57, \\ 0.58, 0.59\}$. The best value of $\eta_0$ was obtained as $0.53$.
}
{
We compared the performance of our proposed method with state-of-the-art methods whose hyperparameter values were fine-tuned in a previous study \cite{li2021second}. We adopted the same hyperparameter values in our own numerical studies. 
The values of the hyperparameters are given in Table \ref{Table_Hyperparam}. In the experiments on the CIFAR100 dataset using Convolutional Neural Network (CNN), the parameter $\eta_0$ is set $0.09$, which is similar to the cosine step decay method. The momentum parameter is consistently set to $0.9$ across all methods and datasets mentioned. 
Additionally, weight-decay is set to $0.0001$ for FashionMnist and CIFAR10, and $0.0005$ for CIFAR100 in all mentioned methods. Furthermore, a batch size of 128 is used in all of the experiments mentioned. The parameter $T_0$ is the ratio of training samples to the batch size.}
The experiments are repeated with random seeds for $5$ times to eliminate the influence of stochasticity. 

\begin{table*}[t] 
\centering
\begin{tabular}{ |p{4cm}|p{1cm}|p{1cm}|p{0.4cm}||p{1 cm}|p{1cm}|p{0.4cm}||p{1 cm}|p{1cm}|p{0.4cm}|}
 \hline
 & \multicolumn{3}{|c||}{FashionMNIST}& \multicolumn{3}{|c||}{CIFAR10}& \multicolumn{3}{|c|}{CIFAR100}\\
 \hline
 Methods & $\eta_0$ & $\alpha$  & $c$ & $\eta_0$  & $\alpha$  & $c$ & $\eta_0$ & $\alpha$  & $c$\\
  \hline
Constant Step Size &   $0.007$  & - & - &   $0.07$ & - & - & $0.07$ & - & -\\
 $O(\frac{1}{t})$ Step Size & $0.05$ & $0.00038$ & -& $0.1$ & $0.00023$ & -& $0.8$ & $0.004$ & -\\
 $O(\frac{1}{\sqrt{t}})$ Step Size & $0.05$ & $0.00653$ & -& $0.2$  & $0.07907$ & - & $0.1$  & $0.015$ & - \\
 Adam &   $0.0009$  & - & -& $0.0009$  & - & - & $0.0009$  & - & - \\
 SGD+Armijo & $0.5$  & - & $0.1$ & $2.5$  & - & $0.1$ & $5$  & - & $0.5$\\
 ReduceLROnPlateau & $0.04$  & $0.5$ & - & $0.07$   & $0.1$ & - & $0.1$   & $0.5$ & - \\
 Stagewise - 1 Milestone & $0.04$   & $0.1$ & - & $0.1$ & $0.1$ & -& $0.07$ & $0.1$ & -\\
 Stagewise - 2 Milestones & $0.04$ & $0.1$ & -& $0.2$  & $0.1$ & -& $0.07$  & $0.1$ & -\\
 Cosine step size & $0.05$ & - & -& $0.25$  & - & -& $0.09$  & - & -\\
 Ln step size & $0.05$ & - & -& $0.25$ & $-$ & -& $0.53$ & $-$ & -\\
 \hline
\end{tabular}
    \caption{The hyperparameter values for the methods employed on the FashionMnist, CIFAR10, and CIFAR100 datasets.}
    \label{Table_Hyperparam}
\end{table*}
\subsection{Methods }
We consider SGD with the following step sizes:
\begin{itemize}
    \item $\eta_t=constant$
    \item $\eta_t=\frac{\eta_0}{1+\alpha\sqrt{t}}$
     \item $\eta_t=\frac{\eta_0}{1+\alpha t}$
     \item $\eta_t=\frac{\eta_0}{2}\left(1+\cos\frac{t\pi}{T}\right)$
     \item $\eta_t=\eta_0\left(1-\frac{\ln t}{\ln T}\right)$
\end{itemize}
which are respectively named with SGD constant step size, $O(\frac{1}{\sqrt{t}})$ step Size, $O(\frac{1}{t})$ step Size, cosine, and the new logarithmic step size. Parameter $t$ represents the iteration number of the inner loop and each outer iteration consists of iterations for training on mini-batches. 
Moreover,  we compare the results  of  the new proposed logarithmic step size with  Adam \cite{kingma2014adam},
SGD+Armijo \cite{vaswani2019painless}, PyTorch’s ReduceLROnPlateau scheduler5 (abbreviated as ReduceLROnPlateau), and stagewise step size. {Note that, similar to \cite{li2021second}, the term "stagewise" refers to the Stagewise - 1 Milestone and Stagewise - 2 Milestone methods.} (As a side note, since we use Nesterov momentum in all SGD variants, the stagewise step decay basically covers the performance of multistage accelerated algorithms (e.g., \cite{aybat2019universally}). In all experiments, we follow the setting proposed in \cite{li2021second}.

\begin{figure*}
  \centering
  \includegraphics[width=1\textwidth]{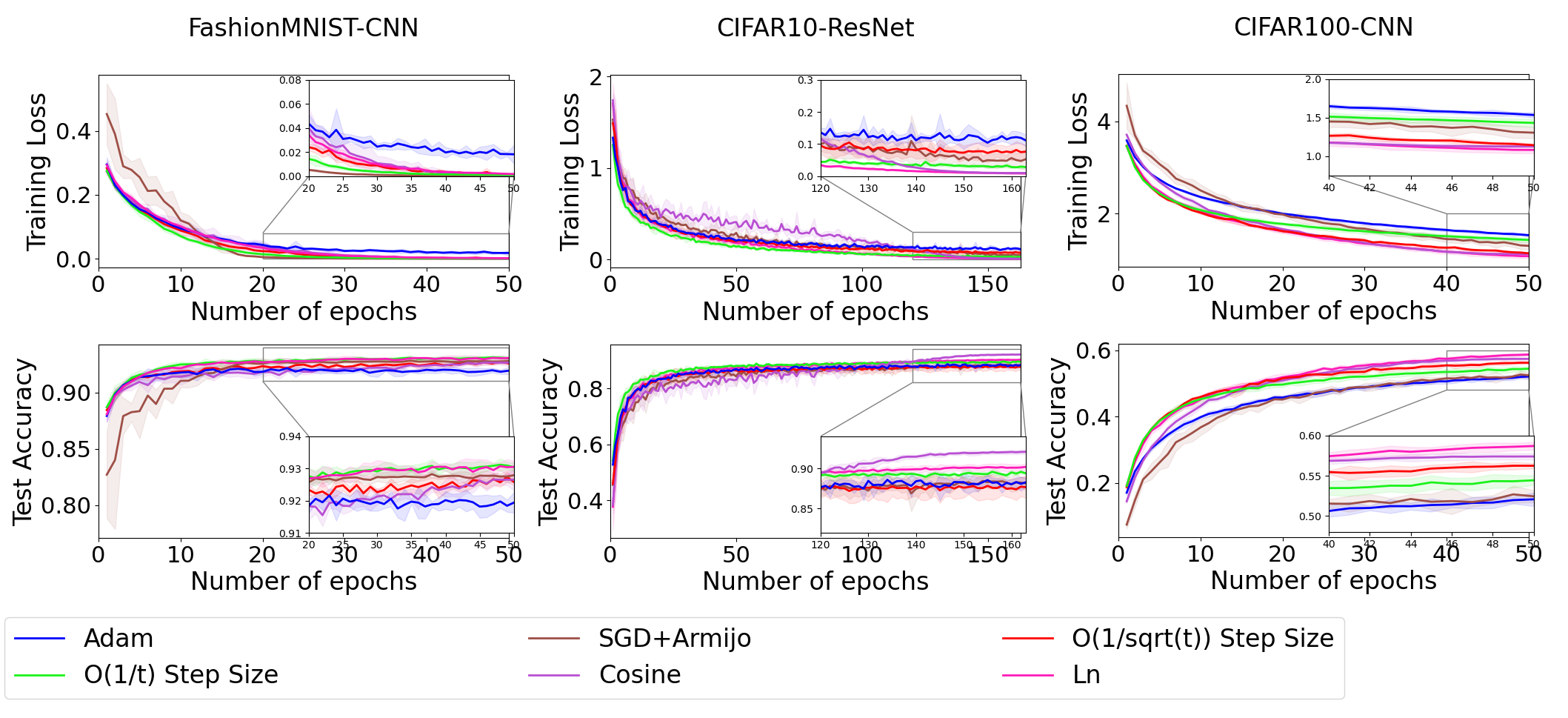}
  \caption{Comparison of new proposed step size and five other step sizes on FashionMinst, CIFAR10, and CIFAR100 datasets. }
  \label{fig:CIFAR101}
\end{figure*}


\subsection{Results and Discussion}
We compare the methods in two groups as illustrated in Figures \ref{fig:CIFAR101} and \ref{fig:CIFAR102} for more clarity of figures. In Figure \ref{fig:CIFAR101}, the new proposed step size achieves the training loss close to zero as the best-known method such as SGD+Armijo after epoch number 40 in the FashionMnist dataset while its test accuracy outperforms all methods. In the CIFAR10 dataset, the new logarithmic step size is as good as the best previously studied method (i.e. $O(\frac{1}{t})$ step size). Figure \ref{CIFR100} demonstrates that the new proposed step size has the best results in terms of test accuracy for the FashionMnist dataset. In the CIFAR10 dataset, the new proposed approach outperforms all the other methods both based on the training loss and the test accuracy. On the other hand, the new logarithmic step size achieves the best training loss, but the stagewise step decay step size works well in terms of accuracy than the other step size for the CIFAR10 dataset as illustrated in Figure \ref{fig:CIFAR102}. 

{
As previously mentioned, we assessed the performance of the new proposed step size on the CIFAR100 dataset using two deep models for image classification. The results, depicted in Figures \ref{fig:CIFAR101} and \ref{fig:CIFAR102}, demonstrate that the new proposed step size outperforms all other methods in both groups when the deep model is a convolutional neural network (CNN). Overall, the use of the DenseNet-BC model results in better performance for all step decays, as shown in Figures \ref{fig:CIFAR10021} and \ref{fig:CIFAR10022}. Notably, the new proposed step size performs better than the state-of-the-art method (i.e. cosine step decay).}

Tables \ref{fashion}--\ref{CIFR100} demonstrate the average of the final training loss and test accuracy obtained by 5 runs starting from different random seeds on FashionMNIST, CIFAR10, and CIFAR100 datasets, respectively.
\\
From Tables \ref{fashion}--\ref{CIFR100}, we can conclude that:
\begin{itemize}
    \item From Table \ref{fashion}, the SGD+Armijo step size obtained the best training loss, however, the proposed step size improved the test accuracy of the SGD by $0.03\%$.
    \item Table \ref{CIFR10} shows that the new proposed step size has the best training loss and the cosine step size has the best test accuracy.
    \item From Table \ref{CIFR100} we can conclude that the new proposed step size can improve the test accuracy of the SGD  by $0.9\%$. 
    \item {Based on the information presented in Table \ref{CIFR100-dens}, it can be inferred that the new proposed step size leads to a $0.2 \%$ improvement in training loss and a $0.11\%$ improvement in test accuracy compared to the cosine step decay method.}
\end{itemize}

\begin{figure*}[t]
  \centering
    \includegraphics[width=1\textwidth]{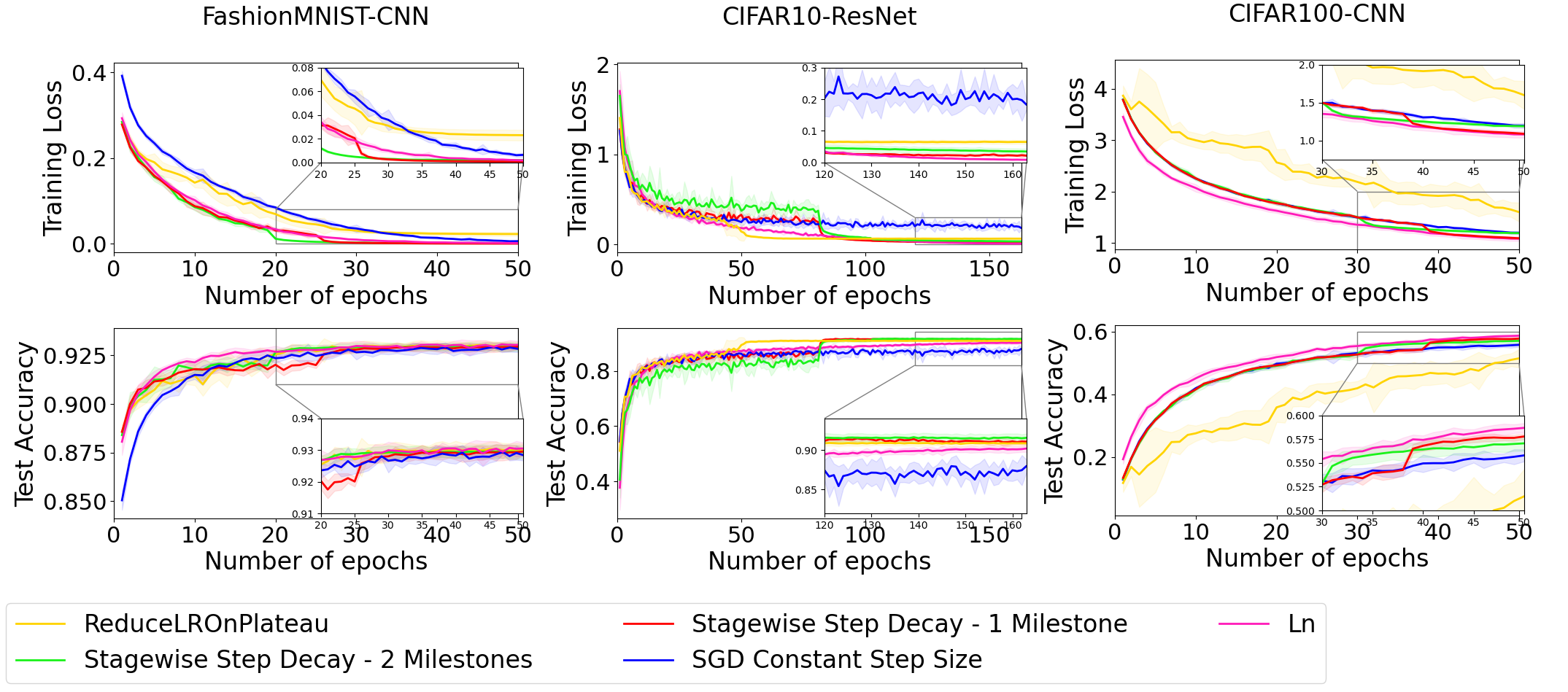}
  \caption{Comparison of new proposed step size and four other step sizes on FashionMinst, CIFAR10, and CIFAR100 datasets.}
  \label{fig:CIFAR102}
\end{figure*}
\begin{figure*}[t]
  \centering
    \includegraphics[width=0.8\textwidth]{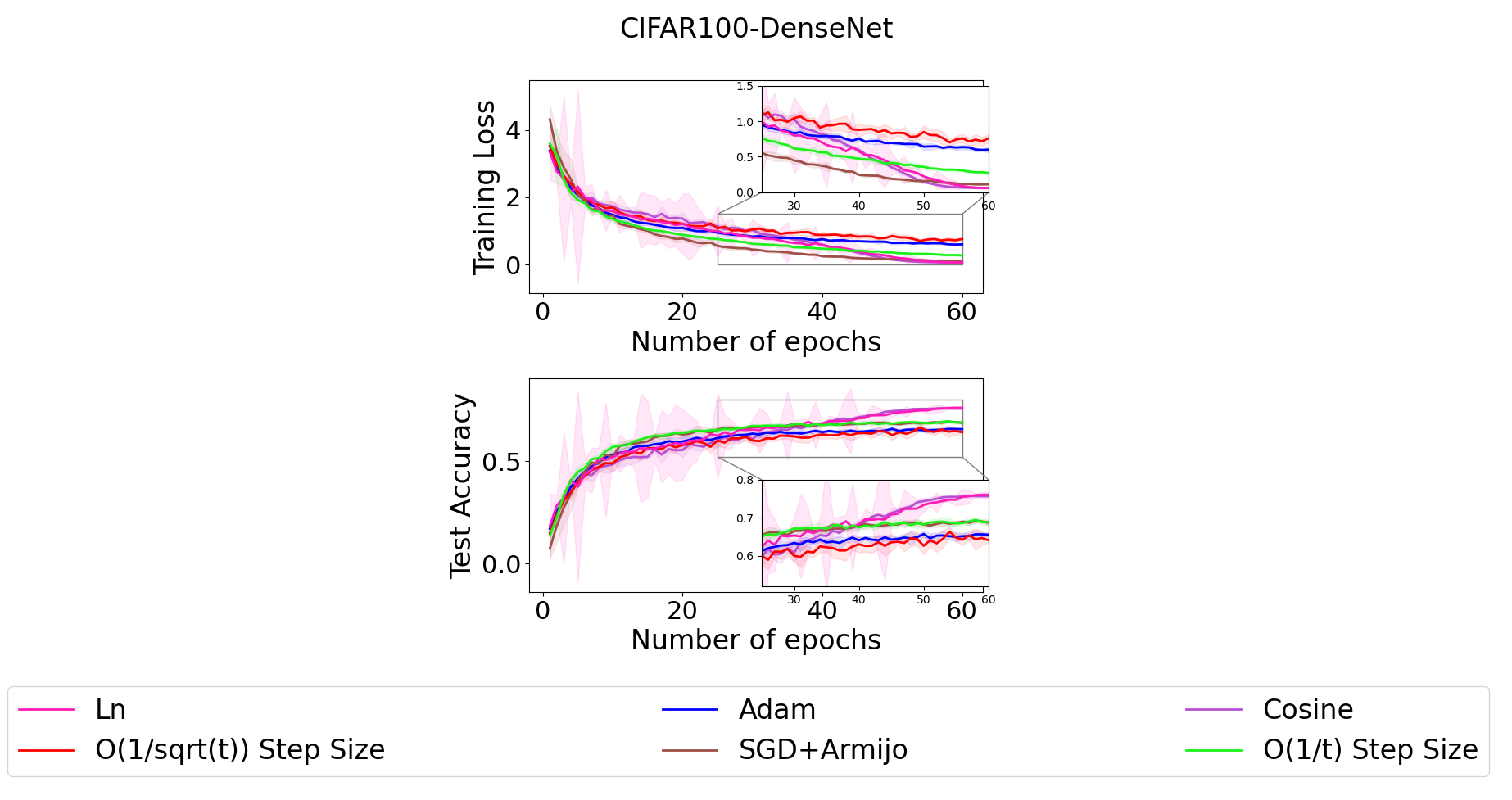}
  \caption{Comparison of new proposed step size and five other step sizes on CIFAR100 dataset using the DenseNet-BC model.}
  \label{fig:CIFAR10021}
\end{figure*}
\begin{figure*}[t]
  \centering
    \includegraphics[width=0.8\textwidth]{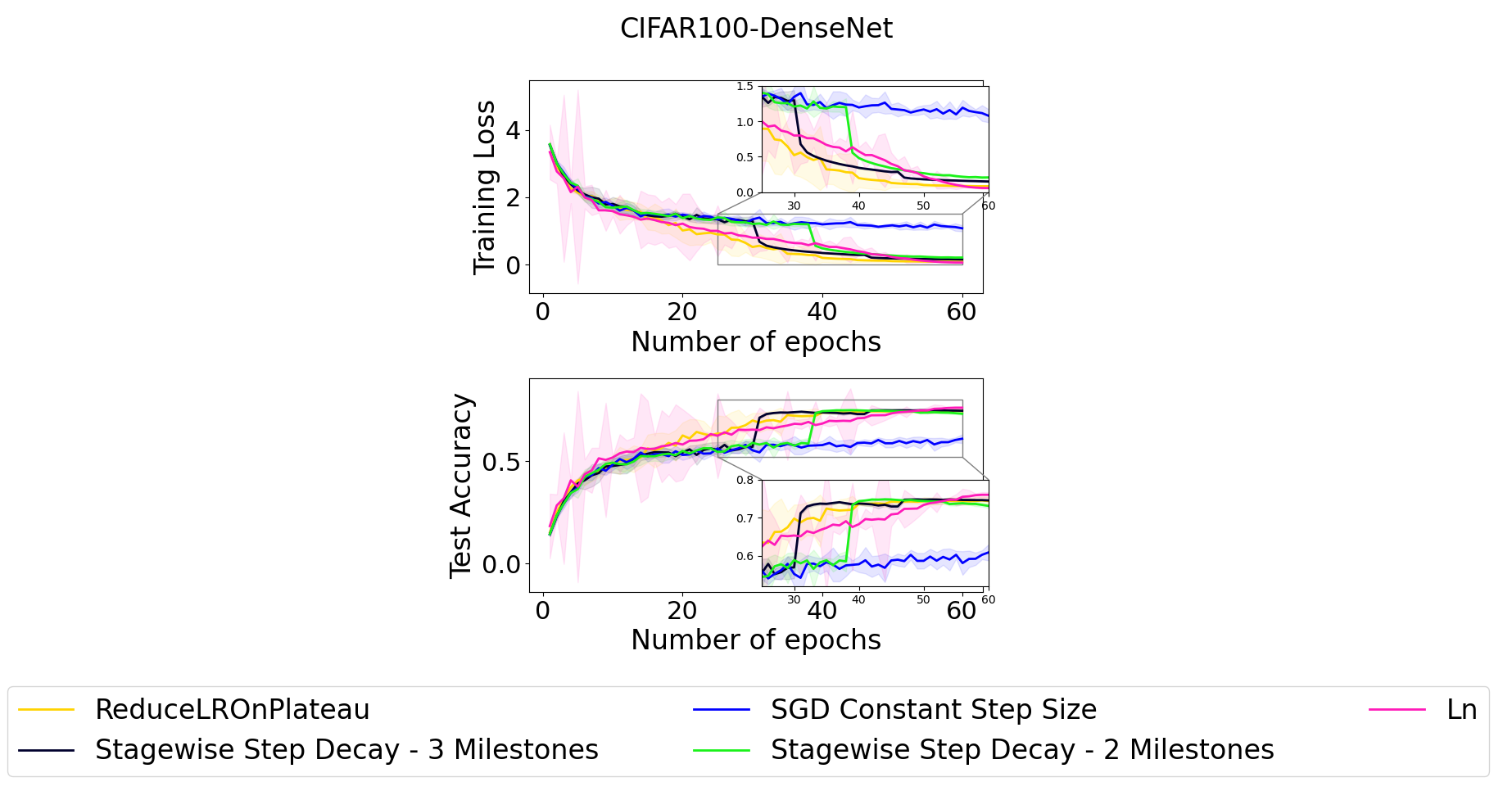}
  \caption{Comparison of new proposed step size and four other step sizes on CIFAR100 dataset using the DenseNet-BC model.}
  \label{fig:CIFAR10022}
\end{figure*}

\begin{table*}[t] 
\centering
\begin{tabular}{ |p{4cm}|p{3.5cm}|p{3cm}| }
 \hline
 Methods & Training loss   & Test accuracy \\
  \hline
Constant Step Size &   $0.0007\pm 0.0003$ & $0.9299\pm 0.0016$
 \\
 $O(\frac{1}{t})$ Step Size & $0.0006 \pm 0.0000$ & $0.9311 \pm 0.0012 $\\
 $O(\frac{1}{\sqrt{t}})$ Step Size & $0.0011\pm 0.0003$ & $0.9275
 \pm 0.0007$\\
 Adam &   $0.0131 \pm 0.0017$  & $0.9166 \pm 0.0019 $\\
 SGD+Armijo & \bf{6.73E-05 $\pm$ 0.0000} & $0.9277 \pm 0.0012$ \\
 ReduceLROnPlateau & $0.0229 \pm 0.0032$  & $0.9299\pm 0.0014
$\\
 Stagewise - 1 Milestone & $0.0004 \pm 0.0000$  & $0.9303 \pm 0.0007$\\
 Stagewise - 2 Milestones & $0.0015 \pm 0.0000$  & $0.9298\pm 0.0014$  \\
 Cosine step size& $0.0004 \pm 1.1E-05$ & $0.9284 \pm 0.0005$  \\
 Ln step size & $0.0007\pm 0.0000$ & \bf{0.9314 $\pm$ 0.0018} \\
 \hline
\end{tabular}
    \caption{Average final training loss and test accuracy on the FashionMNIST dataset. 
 The $95\%$ confidence intervals of the mean loss and accuracy value over 5 runs starting from different random seeds }
    \label{fashion}
\end{table*}

\begin{table*}[t] 
\centering
\begin{tabular}{ |p{4cm}|p{3.5cm}|p{3cm}| }
 \hline
 Methods & Training loss   & Test accuracy \\
  \hline
 Constant Step Size &   $0.1870 \pm 0.0257$ & $0.8776 \pm 0.0060$ \\
 $O(\frac{1}{t})$ Step Size & $0.0261 \pm 0.0021$ & $0.8946
 \pm 0.0020$\\
 $O(\frac{1}{\sqrt{t}})$ Step Size & $0.2634 \pm 0.3493$ & $0.8747 \pm 0.0077$\\
 Adam &   $0.1140 \pm 0.0060$  & $0.8839\pm 0.0031$\\
 SGD+Armijo & $0.0463 \pm 0.0169$ & $0.8834 \pm 0.0059$ \\
 ReduceLROnPlateau & $0.0646 \pm 0.0041$  & $0.9081\pm 0.0019$\\
 Stagewise - 1 Milestone & $0.0219 \pm 0.0012$  & $0.9111 \pm 0.0034$  \\
 Stagewise - 2 Milestones & $0.0344\pm 0.0062$  & $0.9151\pm 0.0039$  \\
 Cosine step size & $0.0102\pm 0.0007$ & \bf{0.9201 $\pm $ 0.0017} \\
 Ln step size  &  \bf{0.0089 $\pm$ 0.0011} & $0.9012 \pm 0.0027$  \\
 \hline
\end{tabular}
    \caption{Average final training loss and test accuracy obtained by 5 runs starting from different random seeds and $\pm$ is followed
by an estimated margin of error under $95\%$ confidence on the CIFR10 dataset.}
    \label{CIFR10}
\end{table*}

\begin{table*}[t] 
\centering
\begin{tabular}{ |p{4cm}|p{3.5cm}|p{3cm}| }
 \hline
 Methods & Training loss   & Test accuracy \\
  \hline
 Constant Step Size &   $1.19599 \pm 0.0183$ & $0.5579\pm 0.0039$ \\
 $O(\frac{1}{t})$ Step Size & $1.43381 \pm 0305$ & $0.54436 \pm 0.0038$\\
 $O(\frac{1}{\sqrt{t}})$ Step Size & $1.14579 \pm 0.0125$ & $0.5624 \pm 0.0020$\\
 Adam &   $1.53433 \pm 0.0180$  & $0.5206 \pm 0.0053$\\
 SGD+Armijo & $1.3046 \pm 0.0476$ & $0.5244 \pm 0.0053$ \\
 ReduceLROnPlateau & $1.6000 \pm 0.055$  & $0.5148 \pm 0.1929$\\
 Stagewise - 1 Milestone & $1.1889 \pm 0.0077$  & $0.5706 \pm 0.0049$   \\
 Stagewise - 2 Milestones & $1.09061 \pm 0.0040$  & $0.5779 \pm 0.0017$  \\
 Cosine step size & $1.12694 \pm 0.0081$ & $0.5739 \pm 0.0025$  \\
 Ln step size  & \bf{1.0805 $\pm$ 0.0327} & \bf{0.5869 $\pm$ 0.0035}  \\
 \hline
\end{tabular}
    \caption{Average final training loss and test accuracy obtained by 5 runs starting from different random seeds and $\pm$ is followed
by an estimated margin of error under $95\%$ confidence on the CIFR100 dataset using the convolutional neural network (CNN) model. }
    \label{CIFR100}
\end{table*}

\begin{table*}[t] 
\centering
\begin{tabular}{ |p{4cm}|p{3.5cm}|p{3cm}| }
 \hline
 Methods & Training loss   & Test accuracy \\
  \hline
 Constant Step Size &   $1.0789\pm 0.06336$ & $0.6089\pm 0.01485$ \\
 $O(\frac{1}{t})$ Step Size & $0.2709 \pm 0.02047$ & $0.6885 \pm 0.0051$\\
 $O(\frac{1}{\sqrt{t}})$ Step Size & $ 0.7584 \pm 0.04068$ & $0.6411 \pm 0.00469$\\
 Adam &   $0.6008 \pm 0.02396$  & $0.6555 \pm 0.00430$\\
 SGD+Armijo & $0.1162 \pm 0.01175$ & $0.6869 \pm 0.00460$ \\
 ReduceLROnPlateau & $0.0863 \pm 0.00896 $  & $0.7440 \pm 0.00695$\\
 Stagewise - 1 Milestone & $0.15300\pm 0.00382$  & $0.7456 \pm 0.00188$   \\
 Stagewise - 2 Milestones & $ 0.21032\pm 0.01067$  & $0.73102
\pm 0.00402$  \\
 Cosine step size & $0.0622 \pm 0.00349$ & $0.7568 \pm 0.00311$  \\
 Ln step size  & \bf{0.06011 $\pm$ 0.0018} & \bf{0.7579 $\pm$ 0.0022}  \\
 \hline
\end{tabular}
    \caption{Average final training loss and test accuracy obtained by 5 runs starting from different random seeds and $\pm$ is followed
by an estimated margin of error under $95\%$ confidence on the CIFR100 dataset using the DenseNet-BC model. }
    \label{CIFR100-dens}
\end{table*}

\section{Conclusion}
We introduced a new logarithmic step size with warm restarts technique for the SGD. We showed that the new step size goes to zero more slowly than most existing step sizes. We showed that the new logarithmic step size
achieves the rate of $O(\frac{1}{\sqrt{T}})$ for the SGD with a smooth non-convex objective function. Finally, to prove the effectiveness of the new proposed step size in practice, we did a wide range of implementations on three famous datasets, i.e. FashionMinst, CIFAR10, and
CIFAR100 datasets, and compared the obtained results with 9 other step sizes. For the CIFAR100 datasets, the proposed step size improved the accuracy of the algorithm by $0.9\%$ when we utilized a convolutional neural network model. For two other datasets, the new proposed step size obtained good results.
\\
{Further investigation into the convergence rate of SGD based on the logarithmic step size is warranted for convex and strongly convex objective functions. Additionally, while this paper focuses on the convergence rate for smooth non-convex functions that do not satisfy the PL condition, it would be interesting to examine the convergence rate for smooth non-convex functions that satisfy the PL condition.}

\section*{Acknowledgments}
The authors would like to thank the editors and anonymous reviewers for their constructive comments. The research of the first author is partially supported by the University of Kashan under grant number 1143902/2.
\subsection*{Appendix A}
\begin{proof}
The proof of items 1 and 2  can be found in \cite{finney2001thomas}.  
{
To prove item (3), we start by defining the function $h(x)$ and demonstrating its positivity for all $x\geq 1$. Let us define:
\begin{eqnarray}
h(x):=\ln x-\frac{x-1}{x}\nonumber
\end{eqnarray}
Next, we calculate the derivative of $h(x)$:
\begin{equation}
h'(x)=\frac{1}{x}-\frac{1}{x^2}=\frac{x-1}{x^2}
\end{equation}
It is evident that $h'(x)\geq 0$ for all $x\geq 1$, indicating that $h(x)$ is an increasing function for $x\geq 1$. To show that $h(x)$ is positive, we evaluate $h(1)$, which yields $h(1)=0$. Since $h'(x)>0$ and $h(1)=0$, we conclude that $h(x)\geq 0$ for all $x\geq 1$. Consequently, we have $\ln x\geq \frac{x-1}{x}$ for all $x\geq 1$.}

\end{proof}
\begin{proof}{\bf{Proof of Lemma \ref{low_bound}}}
 {  To prove, we use the definition of the new proposed step size and we have:
\begin{eqnarray*}
\sum_{t=1}^{T}\eta_{t}=\eta_0\sum_{t=1}^{T}\left(1-\frac{\ln(t)}{\ln T}\right)=\eta_0\sum_{t=1}^{T}\left(\frac{\ln T-\ln t}{\ln T}\right)
=\frac{\eta_0}{\ln T}\sum_{t=1}^{T}\left(\ln T-\ln t\right)
\end{eqnarray*}
Now, we use the second item of Lemma \ref{ln_pro}, that is, $\ln x-\ln y=\ln\frac{x}{y}$, and we have:
\begin{eqnarray*}
\sum_{t=1}^{T}\eta_{t}&=&\frac{\eta_0}{\ln T}\sum_{t=1}^{T}\ln\frac{T}{t}.
\end{eqnarray*}
Next, we are going to work on $\ln\frac{T}{t}$. In this regard, use the third item of Lemma \ref{ln_pro}, i.e., $\ln x\geq \frac{x-1}{x}$ for all $x\geq 1$.  Since $\frac{T}{t}\geq 1$ therefore, we have  $\ln\frac{T}{t}\geq \frac{\frac{T}{t}-1}{\frac{T}{t}}=\frac{\frac{T-t}{t}}{\frac{T}{t}}=\frac{T-t}{T}$. It implies that:
\begin{eqnarray*}
\sum_{t=1}^{T}\eta_{t}&\geq& \frac{\eta_0}{T\ln T}\sum_{t=1}^T(T-t) 
\end{eqnarray*}
Now, we use the fact that $\sum_{t=1}^TT-t=\sum_{t=1}^Tt$ and conclude that:
\begin{eqnarray*}
\sum_{t=1}^{T}\eta_{t}\geq \frac{\eta_0}{T\ln T}\sum_{t=1}^Tt=\frac{\eta_0(T+1)}{2\ln T}
,
\end{eqnarray*}
where the last equality is obtained from the fact that $\sum_{t=1}^Tt=\frac{T(T+1)}{2}$.}
\end{proof}
\begin{proof}{\bf Proof Lemma \ref{uper_bound}:}
To establish the lemma, we employ the fact that for the function 
$f(x)$, we have:
\begin{eqnarray*}
\sum_{i=1}^T f(x_i) \leq 1 + \int_1^T f(x)dx.
\end{eqnarray*}
Consequently, for the newly proposed step size, we obtain:
\begin{eqnarray}\label{up1}
\eta_0^2 \sum_{t=1}^T \left(1 - \frac{\ln t}{\ln T}\right)^2 \leq \eta_0 \left[1 + \int_1^T \left(1 - \frac{\ln t}{\ln T}\right)^2  dt\right].
\end{eqnarray}
To obtain an upper bound for (\ref{up1}) we will utilize the following integrals:
\begin{eqnarray}
\int \ln x  ~dx &=& x \ln x - x + C \nonumber \\
\int \ln^2 x ~dx &=& x \ln^2 x - 2x \ln x + 2x + C, \label{integ}
\end{eqnarray}
where $C$ represents a constant.
\\
The right-hand side of inequality (\ref{up1}) can be expressed as:
\begin{eqnarray}
= \eta_0^2 \left[1 + \int_1^T \frac{\ln^2 T - 2\ln T \ln t + \ln^2 t}{\ln^2 T} dt\right].\label{sec}
\end{eqnarray}
Now, utilizing (\ref{integ}) and considering that 
$\ln 1=0$, we can represent (\ref{sec}) as:
\begin{eqnarray*}
&=& \frac{\eta_0^2}{\ln^2 T} \left[\ln^2 T + (T-1)\ln^2 T\right] 
- 2 \frac{\eta_0^2}{\ln^2 T} \left[\ln T (T\ln T - T + 1)\right]
+ \frac{\eta_0^2}{\ln^2 T} \left[T\ln^2 T - 2T\ln T + 2T - 2\right] \\
&=& \frac{\eta_0^2 [2T - 2\ln T - 2]}{\ln^2 T} 
\leq \eta_0^2 \frac{2T}{\ln^2 T}.
\end{eqnarray*}
It implies that:
\begin{eqnarray*}
    \sum_{i=1}^T\eta_t^2\leq  \eta_0^2 \frac{2T}{\ln^2 T}.
\end{eqnarray*}
\end{proof}
\begin{proof}{\bf{Proof of Lemma \ref{uper_grad}}:}
    We know that $x_{t+1}=x_t-\eta_t\hat{g}_t$, where $\Bbb{E}[\hat{g}_t]=\nabla f(x_t)$. Using the smoothness of $f$, we have:
\begin{eqnarray}
    f(x_{t+1})
    &\leq& f(x_t)+\langle\nabla f(x_t),x_{t+1}-x_{t}\rangle
  +\frac{L}{2}\|x_{t+1}-x_{t}\|^2\nonumber\\
    &\leq&f(x_t)+\langle\nabla f(x_t),-\eta_t\hat{g}_t\rangle
   +
    +\frac{L}{2}\|x_{t+1}-x_{t}\|^2\nonumber\\
        &\leq&f(x_t)-\eta_t\langle\nabla f(x_t),\hat{g}_t\rangle+\frac{L\eta^2_t}{2}\|\hat{g}_t\|^2\label{smoo}
\end{eqnarray}
The stochastic gradient oracle is variance-bounded by $\sigma^2$, that is $\Bbb{E}[\|\hat{g}_t -\nabla f(x_t)\|^2]\leq \sigma^2$. Taking expectation on both sides of (\ref{smoo}) and applying $\Bbb{E}[\hat{g}_t] = \nabla f(x_t)$ and the variance-bounded assumption gives
\begin{eqnarray}
\Bbb{E}[f(x_{t+1})]
 &  \leq&\Bbb{E}[f(x_t)]-\eta_t\Bbb{E}[\|\nabla f(x_t)\|^2]+\frac{L\eta^2_t}{2}\Bbb{E}[\|\hat{g}_t\|^2]\nonumber\\
&    \leq&\Bbb{E}[f(x_t)]-\eta_t\Bbb{E}[\|\nabla f(x_t)\|^2]\nonumber+\frac{L\eta^2_t}{2}\Bbb{E}[\|\hat{g}_t
-\nabla f(x_t)+\nabla f(x_t)\|^2]\nonumber
\end{eqnarray}
Using the fact that gradient $\hat{g}_t$ is unbiased, i.e., $ \Bbb{E}[\hat{g}_t]=\nabla f(x_t)$. Therefore, we have:
  \begin{eqnarray}
\Bbb{E}[\|\hat{g}_t-\nabla f(x_t)\|^2]
&=&\Bbb{E}[\|\hat{g}_t-\nabla f(x_t)\|^2]
+\|\nabla f(x_t)\|^2
+2\langle \hat{g}_t-\nabla f(x_t),\nabla f(x_t)\rangle\nonumber\\
&=&\Bbb{E}[\|\hat{g}_t-\nabla f(x_t)\|^2]+\Bbb{E}[\|\nabla f(x_t)\|^2].\nonumber
\end{eqnarray}
Therefore, we can conclude that
 \begin{eqnarray}
   \Bbb{E}[f(x_{t+1})] &\leq& \Bbb{E}[f(x_t)]-\eta_t\Bbb{E}[\|\nabla f(x_t)\|^2]\frac{L\eta^2_t}{2}\left(\Bbb{E}[\|\hat{g}_t-\nabla f(x_t)\|^2]+\Bbb{E}[\|\nabla f(x_t)\|^2]\right)\nonumber\\
       &\leq&\Bbb{E}[f(x_t)]+\left(-\eta_t+\frac{L\eta^2_t}{2}\right)\Bbb{E}[\|\nabla f(x_t)\|^2]+\frac{L\eta^2_t}{2}\Bbb{E}[\|\hat{g}_t-\nabla f(x_t)\|^2]\nonumber\\
        &\leq&\Bbb{E}[f(x_t)]+\left(-\eta_t+\frac{L\eta^2_t}{2}\right)\Bbb{E}[\|\nabla f(x_t)\|^2]+\frac{L\eta^2_t\sigma^2}{2}.\nonumber
\end{eqnarray}
Using this fact that $\eta_t\leq\frac{1}{cL}\leq \frac{1}{L}$, we have $-\eta_t+\frac{L\eta^2_t}{2}\leq -\frac{\eta_t}{2}$. It implies that:
\begin{eqnarray}
    \frac{\eta_t}{2}\Bbb{E}[\|\nabla f(x_t)\|^2]&\leq& \Bbb{E}[f(x_t)]-\Bbb{E}[f(x_{t+1})]+\frac{L\eta^2_t\sigma^2}{2}\nonumber
\end{eqnarray}
\end{proof}
\begin{proof}{\bf{Proof of Corollary \ref{warm}}}\\
{Corollary \ref{comp} is true for $i=1,2,...,l$. Therefore,  we have:
    \begin{eqnarray*}
\min_i (\Bbb{E}\|\nabla f(\bar{x}_{T})\|^2)&\leq& \sum_{i=1}^l  \Bbb{E}\|\nabla f(\bar{x}_{T_i})\|^2 \\
  &\leq& \sum_{i=1}^{l}\left(\frac{4cL\ln T}{(T)}\left(f(x_{1_i})-f^*\right)+\frac{4\sigma^2T}{Lc(T+1)\ln T}\right)\\
   &\leq& l \max_i\left(\frac{4cL\ln T}{(T+1)}\left(f(x_{1_i})-f^*\right)+\frac{4\sigma^2T}{Lc(T+1)\ln T}\right)\\
   &=&\frac{4lcL\ln T}{T}\left(f(\tilde{x}_{1})-f^*\right)+\frac{4\sigma^2l}{Lc\ln T},
    \end{eqnarray*}
    in which $f(\tilde{x}_1)=\max_i\{f(x_{1_i})\}$.}
\end{proof}

\bibliographystyle{unsrtnat}
\bibliography{references}  






\end{document}